\documentclass[conference]{llncs}

\pagestyle{plain}
\usepackage{cite}
\usepackage{amsmath,amssymb,amsfonts}
\usepackage{graphicx}
\usepackage{textcomp}
\usepackage{xcolor}
\usepackage{algorithm}
\usepackage{algpseudocode}
\usepackage{amsmath}
\usepackage{makecell}
\usepackage{booktabs}
\usepackage{multirow}
\usepackage{wrapfig}
\usepackage{hyperref}
\hypersetup{
    colorlinks=true,
    linkcolor=blue
}


\newcommand\blfootnote[1]{%
  \begingroup
  \renewcommand\thefootnote{}\footnote{#1}%
  \addtocounter{footnote}{-1}%
  \endgroup
}

\newcommand{\tablelinespace}{\addlinespace[1mm]}

\def\BibTeX{{\rm B\kern-.05em{\sc i\kern-.025em b}\kern-.08em
    T\kern-.1667em\lower.7ex\hbox{E}\kern-.125emX}}
\begin{document}

\title{LipBaB: Computing exact Lipschitz constant of Relu networks
}

\author{Aritra Bhowmick
\and
Meenakshi D'Souza
\and
G. Srinivasa Raghavan
}
\authorrunning{Bhowmick et al.}
%
\institute{International Institute of Information Technology Bangalore\\
Bangalore, India}

\maketitle

\blfootnote{Research sponsored by DRDO Headquarters, New Delhi, India.}
\blfootnote{Corresponding author email: meenakshi@iiitb.ac.in}

\begin{abstract}
The Lipschitz constant of neural networks plays an important role in several contexts of deep learning ranging from robustness certification and regularization to stability analysis of systems with neural network controllers. Obtaining tight bounds of the Lipschitz constant is therefore important.  We introduce LipBaB, a branch and bound framework to compute certified bounds of the local Lipschitz constant of deep neural networks with ReLU activation functions up to any desired precision. It is based on iteratively upper-bounding the norm of the Jacobians, corresponding to different activation patterns of the network caused within the input domain. Our algorithm can provide provably exact computation of the Lipschitz constant for any $p$-norm.

\end{abstract}

\section{Introduction}

The notion of Lipschitz constant for a function, in general, bounds the rate of change of outputs with respect to the inputs. For neural networks, the Lipschitz constant of the network is an useful metric to measure sensitivity, robustness and many other properties. It has several applications in the context of deep learning \cite{deepmind}. It can be used as a regularization constraint while training or provide certified robustness bounds against adversarial perturbations \cite{lipschitzmargin}. It also helps in providing guaranteed generalization bounds \cite{generalizationBounds}. Other use cases involve estimating Wasserstein distance \cite{estWassDist}, stabilising training of GANs \cite{spectralNorm}, and acting as building blocks for formulating invertible neural networks and flow based generative models \cite{invres}, \cite{flowres}. Hence, a provably correct and accurate Lipschitz constant estimation technique is important.  

There have been several recent works related to the estimation of Lipschitz constants \cite{lipmilp},\cite{fastlip}, \cite{NIPS2018-seqlip}, etc. Section~\ref{sec:related-work}
provides a brief study on these works, including a comparison with our algorithm. 

It has been shown that exactly computing the Lipschitz constant or even within any desired approximation is computationally hard \cite{NIPS2018-seqlip}, \cite{lipmilp}. Hence computing the exact Lipschitz constant for large networks is almost infeasible. However, any technique which provides iteratively refined bounds with convergence over time is desired. Techniques used for the verification of neural networks including robustness certification, output bounds estimation and calculation of the Lipschitz constant are deeply interlinked and share many similarities among them. Our work also uses some of the extensions and combinations of related techniques like symbolic propagation, interval arithmetic and linear programming (for feasibility checking), within our proposed branch and bound framework.

Our Branch and Bound (BaB) algorithm is based on iterative space partitioning and upper bounding the local Lipschitz constant for each such partition. Each node in the BaB tree has associated with it, an input partition defined by some half-space constraints and a activation pattern of the network on which the Lipschitz upper bounds are calculated. At each iteration our algorithm provides a refined upper bound of the exact Lipschitz constant until convergence. Preliminaries and notations used are given in Section~\ref{sec:preliminaries}. Section~\ref{sec:overview} gives an overview of the overall approach. Sections~\ref{sec:algorithm} and \ref{sec:final-algorithm} provides details of our algorithm. Section~\ref{sec:experiments} provides the implementation and experimental results demonstrating the performance on different parameters.

\section{Related Work}
\label{sec:related-work}
We provide a brief comparison of some of the related works and the specific settings they apply to in the table below \footnote{We have used the names of the techniques as given in the respective papers. The column titled \textbf{LipBaB} is our algorithm}. To the best of our knowledge LipBaB is the first work which is able to calculate the \textbf{exact} local Lipschitz constant for \textbf{any} $p$-norm. Our algorithm can also be used to compute the \textbf{global} Lipschitz constant.
    
\begin{table}[]
\begin{tabular}{c|ccccccc}
\textbf{}    & \textbf{LipBaB}           & \textbf{LipMIP}            & \textbf{LipSDP}                & \textbf{SeqLip}                & \textbf{CLEVER}                & \textbf{LipOpt}            & \textbf{FastLip} \\ \hline
global/local & \multicolumn{1}{c|}{local, global} & \multicolumn{1}{c|}{local} & \multicolumn{1}{c|}{global}    & \multicolumn{1}{c|}{global}    & \multicolumn{1}{c|}{local}     & \multicolumn{1}{c|}{local} & local            \\
gurantee     & \multicolumn{1}{c|}{exact} & \multicolumn{1}{c|}{exact} & \multicolumn{1}{c|}{upper}     & \multicolumn{1}{c|}{heuristic} & \multicolumn{1}{c|}{heuristic} & \multicolumn{1}{c|}{upper} & upper            \\
p-norms      & \multicolumn{1}{c|}{p}     & \multicolumn{1}{c|}{1,inf} & \multicolumn{1}{c|}{2}         & \multicolumn{1}{c|}{p}         & \multicolumn{1}{c|}{p}         & \multicolumn{1}{c|}{p}     & p                \\
activations  & \multicolumn{1}{c|}{ReLu}  & \multicolumn{1}{c|}{ReLu}  & \multicolumn{1}{c|}{ReLu,Diff} & \multicolumn{1}{c|}{ReLu}      & \multicolumn{1}{c|}{ReLu,Diff} & \multicolumn{1}{c|}{Diff}  & ReLu            
\end{tabular}
\end{table}

As shown in the above table, each of the approaches on estimating Lipschitz constants meet different requirements, depending on the kind of norms that can be evaluated, or the global or local nature of the Lipschitz constant or whether they are upper bounds or lower bounds or heuristic estimations. Difference also arises on the kind of activation functions that are supported. 
Fastlip provides an efficient way of upper-bounding the local Lipschitz constant of ReLU-networks using interval-bound propagation \cite{fastlip}. LipSDP uses semi-definite programming to provide upper bounds of $l_2$-Lipschitz constant \cite{lipsdp}. CLEVER is a heuristic approach which uses extreme value theory and sampling techniques to find a approximation of the Lipschitz constant \cite{clever}. SeqLip transforms this problem into a combinatorial optimization problem and uses greedy strategies to provide estimates \cite{NIPS2018-seqlip}. LipOpt uses polynomial optimization to find global or local upper bounds of continuously differentiable networks \cite{lipopt}. LipMip is able to calculate the exact Lipschitz constant under $l_0, l_{\infty}$ norms by formulating the problem as a Mixed Integer problem \cite{lipmilp}. LipMip can be extended to other linear norms also.

\section{Preliminaries}
\label{sec:preliminaries}
This section provides preliminaries, notations and background definitions as required by the problem. 

\subsection{Vectors, Matrices and Intervals}
\begin{table}[h!]
    \centering
    \begin{tabular}{cl}
      $\underline{a},\overline{a}$  &  \makecell[tl]{lower and upper bounds of $a$}\\
      \tablelinespace
      $[a],[a]_i$ & \makecell[tl]{interval vector and its $i^{th}$ component $[\underline{a}_i,\overline{a}_i]$}\\
      \tablelinespace
      $[A],[A]_{ij}$ & \makecell[tl]{interval matrix and its $(i,j)^{th}$ element $[\underline{A}_{ij},\overline{A}_{ij}]$}\\
    \end{tabular}
\end{table}
Addition and multiplication operations on intervals, where $p,q$ are intervals and $c$ is a scalar
\begin{align*}
    cp &\equiv [min(c\underline{p},c\overline{p}),max(c\underline{p},c\overline{p})]\\
    c+p &\equiv [c+\underline{p},c+\overline{p}]\\
    p+q &\equiv [\underline{p}+\underline{q},\overline{p}+\overline{q}]\\
    pq &\equiv [min(\underline{p}\underline{q},\underline{p}\overline{q},\overline{p}\underline{q},\overline{p}\overline{q}), max(\underline{p}\underline{q},\underline{p}\overline{q},\overline{p}\underline{q},\overline{p}\overline{q})]\\
\end{align*}

\subsection{Operator norms}

The $p$-norm, $\|\cdot\|_p$, of a vector $x$ is defined as: \[ \|x\|_p=\left(\sum_{i=1}^n |x_i|^p \right)^{\frac{1}{p}}\]
Let $A$ be a matrix of size $m\times n$. Then the operator norm of this matrix, induced by the vector norm $\|\cdot\|_p$, is defined as:
\[ \|A\|_p =\sup_{x \in \mathbb{R}^n}\|Ax\|_p, \|x\|_p=1  \]
The following lists the values of $p$-norm for some commonly used values of $p$. 

\[ \|A\|_p=
\begin{cases}
max_{1\leq j\leq n}\sum_{i=1}^m |A_{ij}|, & \text{if } p=1\\
\sigma_{max} & \text{if } p=2\\
max_{1\leq i\leq n}\sum_{j=1}^n |A_{ij}|, & \text{if } p=\infty
\end{cases}
\]
where $\sigma_{max}$ is the largest singular value of the matrix $A$.

Note that $\|A\|_2 \leq \|A\|_F=\left( \sum_{i=1}^m \sum_{j=1}^n |A_{ij}|^2\right)^{1/2}$, where the $\|A\|_F$ is the Frobenius norm of $A$. 

\subsection{Generalized Jacobian}
The Jacobian  of a function $f:\mathbb{R}^n \rightarrow \mathbb{R}^m $ at a differentiable point $x$ is given as:
\[ J_f(x)= 
\begin{pmatrix}
\frac{\delta f_1}{\delta x_1} & \frac{\delta f_1}{\delta x_2} &  \dots & \frac{\delta f_1}{\delta x_n}\\
\frac{\delta f_2}{\delta x_1} & \frac{\delta f_2}{\delta x_2} & \dots & \frac{\delta f_2}{\delta x_n}\\
. & . & . & .\\
\frac{\delta f_m}{\delta x_1} & \frac{\delta f_m}{\delta x_2} & \dots & \frac{\delta f_m}{\delta x_n}
\end{pmatrix}
\]
For functions which are not continuously differentiable, we have the notion of Clarke's generalized Jacobian. The generalized Jacobian of such a function $f$ at a point $x$ is defined as: 
\[ \delta_f(x)=\textit{co}\{\lim_{x_i\rightarrow x} J_f(x_i): x_i \textit{ is differentiable}\}\]
In other words, $\delta_f(x)$ is the convex hull of the set of Jacobians of nearby differential points. For a differentiable point $\delta_f(x)$ is a singleton set $\{J_f(x)\}$. 

\subsection{Lipschitz Constant and norms of Jacobians}

For a locally Lipschitz continuous function $f: \mathbb{R}^n \rightarrow \mathbb{R}^m$ defined over an open domain $\mathcal{X} \in \mathbb{R}^n$,
the local Lipschitz constant $\mathcal{L}_{p}(f,\mathcal{X})$ is defined as the smallest value such that 
\[ \forall x,y\in \mathcal{X}: \|f(y)-f(x)\|_p \leq \mathcal{L}_{p}(f,\mathcal{X}) \cdot \| y-x\|_p \]

For a differentiable and locally Lipschitz continuous function, the Lipschitz constant is given as \cite{deepmind}: 
\[\mathcal{L}_p(f,\mathcal{X})| = sup_{x\in \mathcal{X}}\|J_f(x)\|_p , \text{(Federer, 1969)} \] 
where, $\|J_f(x)\|_p$ is the induced operator norm on the matrix $J_f(x)$.

However, in the case of ReLU-networks the function is piece-wise linear in nature and is not differentiable everywhere. For such functions, the above definition of Lipschitz constant can be extended accordingly with the help of Clarke's generalized Jacobian \cite{lipmilp}:
\[\mathcal{L}_p(f,\mathcal{X}) = \sup_{M\in \delta_f(x), x \in \mathcal{X}}\|M\|_p = \sup_{x_d \in \mathcal{X}}\|J_f(x_d)\|_p\]
where, $x_d$ is a differentiable point in $\mathcal{X}$.

It is natural that the $ \sup_{M\in \delta_f(x), x \in \mathcal{X}}\|M\|_p$ is attained at a differentiable point $x_d$ in $\mathcal{X}$. By definition, a norm $\|.\|$ is convex. Therefore the maximal value of the norm of the elements in $\delta_f(x)$, which is itself a convex set, is attained at one of the extreme points which are the Jacobians of nearby differentiable points.

This result shows that for computing the Lipschitz constant we don't need to necessarily account for the non-differentiable points.

\subsection{Upper bounds on Jacobian norms}
\begin{lemma}
If $A$ and $B$ are both matrices of size $m\times n$, and if for each $i,j$, $|A_{ij}| \leq B_{ij}$, then $\|A\|_p \leq \|B\|_p$.
\end{lemma}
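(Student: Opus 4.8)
The plan is to reduce the matrix inequality to a pointwise estimate on image vectors and then invoke monotonicity of the vector $p$-norm. Write $|x|$ for the vector with components $|x_j|$, and recall that $\||x|\|_p = \|x\|_p$. Note first that the hypothesis $|A_{ij}| \le B_{ij}$ forces every $B_{ij} \ge 0$. The starting observation is that for any $x \in \mathbb{R}^n$ and any row index $i$, the triangle inequality gives $|(Ax)_i| = \left|\sum_j A_{ij} x_j\right| \le \sum_j |A_{ij}|\,|x_j| \le \sum_j B_{ij}\,|x_j| = (B|x|)_i$, where the second inequality uses $|A_{ij}| \le B_{ij}$ together with $|x_j| \ge 0$. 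Thus $B|x|$ is a nonnegative vector dominating the vector $|Ax|$ componentwise.

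Next I would record the monotonicity fact: if $u, v$ are vectors with $0 \le u_i \le v_i$ for every $i$, then $\|u\|_p \le \|v\|_p$. For finite $p$ this is immediate from $\|v\|_p = \left(\sum_i |v_i|^p\right)^{1/p}$ and the termwise inequality $u_i^p \le v_i^p$; for $p = \infty$ it is just the statement that $\max_i u_i \le \max_i v_i$. Applying this with $u = |Ax|$ (so $\|u\|_p = \|Ax\|_p$) and $v = B|x|$ gives $\|Ax\|_p \le \|B|x|\|_p$.

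Finally I would chain in the definition of the induced operator norm: $\|B|x|\|_p \le \|B\|_p\,\||x|\|_p = \|B\|_p\,\|x\|_p$. Hence for every $x$ with $\|x\|_p = 1$ we obtain $\|Ax\|_p \le \|B\|_p$, and taking the supremum over the unit sphere yields $\|A\|_p \le \|B\|_p$, as claimed.

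There is essentially no hard step here; the only point needing a little care is the monotonicity of the vector $p$-norm under componentwise domination of nonnegative vectors, together with handling $p = \infty$ separately. Everything else is the triangle inequality and the definition of the induced matrix norm.
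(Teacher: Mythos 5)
Your proof is correct and follows essentially the same route as the paper's: the componentwise bound $|(Ax)_i| \le (B|x|)_i$ via the triangle inequality, monotonicity of the vector $p$-norm, and then the definition of the induced norm (the paper fixes the maximizing unit vector $x^*$ at the outset rather than taking the supremum at the end, a cosmetic difference). Your version is slightly more careful in spelling out the monotonicity step and the $p=\infty$ case, but the substance is identical.
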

\begin{proof}
Let $x^*= \arg\max_{x\in \mathbb{R}^n}\|Ax\|_p$ where $\|x\|_p=1$, and $x'$ be a vector such that $x'_i=|x^*_i|$ . Also we note from the definition of $p$-norm of a vector, that for any two vectors $u,v$ if $|u_i| \leq v_i$ for each $i$, then $\|u\|_p \leq \|v\|_p$. The following chain of inequalities establishes the lemma:
\begin{align*}
|(Ax^*)_i| &= |\sum_{j=1}^n A_{ij}.x^*_j|
\leq \sum_{j=1}^n|A_{ij}.x^*_j| =\sum_{j=1}^n|A_{ij}| \cdot |x^*_j|
\\&\leq \sum_{j=1}^n B_{ij}.|x^*_j| = \sum_{j=1}^n B_{ij}x'_j 
= (Bx')_i
\end{align*}
for each $i$, where $1 \leq i \leq m$.
Therefore, 
\[ \|A\|_p= \|Ax^*\|_p \leq \|Bx'\|_p \leq sup_x\|Bx\|_p=\|B\|_p   \]
\end{proof}

The above result can be used to upper bound the Lipschitz constant by upper bounding the absolute values of the partial derivatives \cite{fastlip}.

\begin{lemma}\label{thm:ub}
Let $U$ be a matrix of the same size as the Jacobian $J_f(x)$ of a function $f(x)$. If $U$ is such that $sup_{x_d\in\mathcal{X}}|J_f(x_d)_{ij}| \leq U_{ij}$ for all $i,j$, then $\mathcal{L}_p(f,\mathcal{X})\leq \|U\|_p$, in the open domain $\mathcal{X}$.
\end{lemma}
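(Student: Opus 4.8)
The plan is to combine the generalized-Jacobian characterization of the Lipschitz constant recalled above with the previous lemma (the entrywise-domination bound for operator norms). Recall that for the ReLU network $f$ we have shown $\mathcal{L}_p(f,\mathcal{X}) = \sup_{x_d \in \mathcal{X}}\|J_f(x_d)\|_p$, where the supremum ranges over differentiable points $x_d \in \mathcal{X}$. Hence it suffices to bound $\|J_f(x_d)\|_p$, uniformly over all such $x_d$, by $\|U\|_p$.

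First I would fix an arbitrary differentiable point $x_d \in \mathcal{X}$. For every index pair $i,j$ the hypothesis gives $|J_f(x_d)_{ij}| \leq \sup_{x_d' \in \mathcal{X}}|J_f(x_d')_{ij}| \leq U_{ij}$; in particular each $U_{ij}$ is nonnegative, so the pair $A := J_f(x_d)$ and $B := U$ meets the hypothesis of the previous lemma. Applying that lemma yields $\|J_f(x_d)\|_p \leq \|U\|_p$.

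Since this inequality holds for every differentiable $x_d \in \mathcal{X}$ with a right-hand side that does not depend on $x_d$, taking the supremum over all such points gives $\mathcal{L}_p(f,\mathcal{X}) = \sup_{x_d \in \mathcal{X}}\|J_f(x_d)\|_p \leq \|U\|_p$, which is the claim.

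There is essentially no hard step here; the only points requiring care are (i) invoking the already-established fact that the supremum defining $\mathcal{L}_p$ may be restricted to differentiable points, which is what makes the previous lemma (stated for ordinary matrices/Jacobians) applicable, and (ii) observing that the bound $\|U\|_p$ is uniform in $x_d$, so passing to the supremum is legitimate even when that supremum is not attained. As a remark, if $f$ were only assumed locally Lipschitz rather than differentiable on all of $\mathcal{X}$, the same argument would still go through via Clarke's generalized Jacobian, since every $M \in \delta_f(x)$ is a convex combination of limits of ordinary Jacobians, and the entrywise bound $|\cdot|_{ij} \leq U_{ij}$ is preserved under such limits and convex combinations.
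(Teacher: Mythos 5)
Your proof is correct and follows essentially the same route as the paper: both reduce to the characterization $\mathcal{L}_p(f,\mathcal{X})=\sup_{x_d\in\mathcal{X}}\|J_f(x_d)\|_p$ over differentiable points and then invoke the preceding entrywise-domination lemma with $B=U$. Your pointwise application of that lemma followed by taking the supremum is in fact slightly cleaner than the paper's phrasing, which informally appeals to an $\arg\max$ point $x_d^*$ that need not exist, but the underlying argument is the same.
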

\begin{proof}
We know that $\mathcal{L}_p(f,\mathcal{X})=sup_{x_d\in\mathcal{X}}\|J_f(x_d)\|_p$.
Now,
\[ \mathcal{L}_p(f,\mathcal{X})=\sup_{x_d\in\mathcal{X}}\|J_f(x_d^*)\|_p \leq \|[sup_{x_d\in\mathcal{X}}|J_f(x_d)_{ij}|]\|_p \leq \|U\|_p  \]
where $x_d^*$ is $\arg\max_{x_d\in\mathcal{X}}\|J_f(x_d)\|_p$.
\end{proof}

\subsection{Feed forward ReLU networks and their Jacobians}

Deep feed-forward ReLU networks (MLPs) are stacked layers of perceptrons with ReLU activation functions. The following notations are used for the networks.

\begin{table}[h!]
    \centering
    \begin{tabular}{cl}
      $L$ &  number of layers in the network\\
      \tablelinespace
      $n^{(l)}$ &  number of neurons in  the $l^{th}$ layer\\
      \tablelinespace
      $x^{(l)}$ & vector of outputs from the neurons at layer $l$\\
      \tablelinespace
      $z^{(l)}$ & vector of inputs to the neurons at layer $l$\\
      \tablelinespace
      $W^{(l)}, b^{(l)}$ & Weights and biases of the $l^{th}$ layer of a network\\
    \end{tabular}
\end{table}

Given an input vector $x$ and a list of parameters $\theta\equiv\{W^{(l)}, b^{(l)}, i=1,\ldots,n\}$ describing the network architecture, the function $f:\mathbb{R}^n \rightarrow \mathbb{R}^m$ represented by the ReLU network is defined as follows. 
\begin{multline*}
f(x,\theta) = W^{(L)}\phi(W^{(L-1)}(\dots\phi(W^{(1)}x+b^{(1)})\dots)+ b^{(L-1)})+b^{(L)}
\end{multline*}
where $\phi$ denotes the ReLU activation function with $\phi(x)=max(0,x)$. ReLU networks are piece-wise linear in nature.
The concept of Jacobians for a network (with respect to the outputs and inputs of $f$) gives us an idea about how the network outputs vary with changes in inputs near a point. The Jacobian at a point $x$ is calculated by the chain rule of derivatives and is done using back propagation. It is important to note that this Jacobian is defined only if the derivatives at every ReLU node is defined. This happens only if the input to each ReLU node is strictly positive or negative. If it is equal to zero then a sub-gradient exists which lies between $[0,1]$. The Jacobian at a point $x$, if defined, can be compactly represented as:
\[J_f(x) = W^{(L)} \Lambda^{(L-1)}W^{(L-1)}\ldots \Lambda^{(1)}W^{(1)}\]
where $\Lambda^l$ encodes the activation pattern of a layer $l$ caused by the input $x$. It is a diagonal matrix, having $1$s as elements if the  corresponding neuron is active, otherwise $0$s for inactive neurons. The Jacobian is the same for all the points strictly inside a linear region with the same activation pattern. Since ReLU networks are piece-wise linear in nature, the Lipschitz constant is exactly equal to the $p$-norm of the Jacobian at one such linear region in the input domain.

\section{Approach overview}
\label{sec:overview}
The proposed algorithm is composed of several components like initial estimation of activation pattern, calculation of Lipschitz bounds and partitioning of sub-problems, which are unified within the branch and bound framework.
This section provides an overview of the algorithm and some of its components. The next section describes each of these components in detail, including the representation of the sub-problems, in a relevant order.

\begin{algorithm}
    \caption{Overview}
    \label{alg:branchandbound}
    \begin{algorithmic}[1]
        \Procedure{Algo}{network $N$, input-domain $\mathcal{X}$}
            \State Set initial half-space constraints and initial activation pattern considering the input domain $\mathcal{X}$
            \State Create new sub-problem and get initial upper-bound  estimation of $\mathcal{L}_p(f,\mathcal{X})$
            \State Initialize the set of sub-problems with the sub-problem
            \While{\textbf{not} Terminate}
                \State Select sub-problem to branch
                \State Branch into new sub-problems and get refined upper-bound estimations
                \State Remove the selected sub-problem and add the new sub-problems in the problem set
            \EndWhile
            \State\Return {Final estimate of the Lipschitz constant}
        \EndProcedure
    \end{algorithmic}
\end{algorithm}

A short overview of some of the components is given below.
\begin{itemize}
    \item \texttt{Symbolic Propagation}: Uses symbolic expressions and intervals to compute the initial estimation of the activation pattern. It also provides us with output bounds of the network for the given input region. We also use symbolic propagation in a slightly different way to generate half-space constraints.
    \item \texttt{Lipschitz Bounds}: Computes an upper bound to $\mathcal{L}_p(f,\psi)$ for a partition $\psi \in \mathcal{X}$ using interval arithmetic.
    \item \texttt{Branching}: Branches a given sub-problem $\rho$ into new sub-problems and computes their Lipschitz upper-bound. Each sub-problem has a partition of the input space associated with it.
\end{itemize} 

\section{Components of the algorithm}
\label{sec:algorithm}
\subsection{Input Domain Abstraction and Interval bound propagation}

We consider the input region $\mathcal{X}$ as a hyper-rectangle of $n$-dimensions. It can be represented as the Cartesian product of the following intervals.
\[\mathcal{X}=[\underline{x_1},\overline{x_1}]\times [\underline{x_2},\overline{x_2}]\times\dots\times [\underline{x_n},\overline{x_n}]\]
where $x_i$ denotes the $i$th dimension of an input $x$.

The main purpose of this section is to get an initial estimation of the activation states of neurons with respect to all inputs in the input region. We do this by calculating the pre-activation bounds of each neuron. Additionally we obtain the output bounds of the network considering all the input regions as well. We mark the state of a neuron as active/inactive based on whether the pre-activation bounds are positive/negative. If the pre-activation bounds contain both negative and positive values, it is marked as $\ast$ or an undecided neuron. These neurons may be active/inactive depending on the inputs.

The pre-activation bounds of each neuron can be calculated using interval bound propagation. For all $l=1,\ldots,n$, taking $[x^{(0)}]=[z^{(0)}]$

\[[z^{(l)}] = W^{(l)}[x^{(l-1)}] + b^{(l)}\]
\[[x^{(l)}_i] = [\max(0,\underline{z^{(l)}_i}),\max(0,\overline{{z}^{(l)}_i} )]\]

where $[z^{(0)}]$ denotes the input interval vector.

However these bounds are an over-approximation of the actual range of values and these over-approximations accumulate across the layers because of the dependency problem in interval arithmetic. The dependency problem occurs because multiple occurrences of the same variables are treated independently.  This dependency problem can be reduced by using symbolic expressions. The symbolic expression makes use of the fact that the input to any neuron depends on the same set of input variables, instead of considering only the outputs from the previous layer independently as done in naive interval bound propagation. A similar approach is presented in \cite{reluval}, but, a crucial difference is that we maintain a single expression instead of two, per neuron. Also we create new symbols instead of concretizing the bounds completely for $\ast$-neurons, to reduce the dependency errors for such a neuron in deeper layers. The symbolic expressions in practice are represented by coefficient vectors and constants.

\subsubsection{Symbolic propagation}
We denote the symbolic counterparts of the input vector $z^l$ and output vector $x^l$ of the neurons in layer $l$, as $ze^l$ and  $xe^l$ respectively. These expressions are symbolic linear expressions over the outputs of neurons from previous layers. To get the pre-activation bounds we calculate the lower and upper bounds of these expressions at each node. If we encounter a $\ast$-neuron, the linear relation breaks down and we can no longer propagate the expression from this node. Hence we introduce an independent variable for this node and hope to preserve the resultant expressions in deeper layers. $[\Lambda]^l$ is a diagonal interval matrix denoting the activation pattern of layer $l$, whose diagonal elements are intervals of the form $[1,1], [0,0]$ or $[0,1]$ corresponding to active, inactive or $\ast$- neurons (undecided) respectively. All other elements of this interval matrix are $[0,0]$. This interval matrix will be later used in calculating the Lipschitz upper bounds. Note that the number of $\ast$-neurons marked can only be exact or an overestimation. It is these $\ast$-neurons which we aim to remove successively by creating partitions.

\subsection{Sub-Problem Representation}
This section explains how a sub-problem, or equivalently a node in the Branch and Bound (BaB) tree, is represented. We denote a sub-problem as $\rho$ and a property $p$ of that sub-problem as $(p,\rho)$. Each sub-problem has a corresponding subset of the input space 
$(\psi,\rho)$ associated with it, defined by the set of half-space constraints $(H,\rho)$. Also, for every sub-problem there is an associated activation pattern of the network, $(A,\rho)$, where
\[A^l_i = \begin{cases}
1 & \text{if known}~ z^{(l)}_i> 0 ~\text{for all inputs in} ~\psi\\
0 & \text{if known}~ z^{(l)}_i< 0 ~\text{for all inputs in} ~\psi\\
\ast & \text{if known otherwise/ not known}
\end{cases}\]

Any sub-problem can be equivalently represented by a pair consisting of its set of half-space constraints and its activation pattern $\{H, A\}$. The half-space constraints of a sub-problem are of the form $zle^l_i<0$ or $zle^l_i>0$,
where $zle^l_i$ is a symbolic linear expression of only the input variables, corresponding to the $i^{th}$ neuron at layer $l$. $zle^0$ is the symbolic input vector.

\begin{algorithm}
\caption{Symbolic Propagation}
\label{symprop}
\begin{algorithmic}[1]
\Procedure{SymProp}{network $N$, input-domain $\mathcal{X}$}
    \State //$ze^l_i$ and $xe^l_i$ are the input and output expressions respectively For $ith$ neuron at level $l$
    \State Initialize $xe^0=ze^0=$ vector of input variables
    \For{l=1,...,L-1}
        \State $ze^l=W^{(l)}xe^{l-1}+b^l$
        \For {$i=1,...,n_l$}
            \If {$\underline{ze^l_i} > 0$}
                \State //Keep Dependency
                \State $xe^l_i=ze^l_i$
                \State $A^l_i=1, [\Lambda]^l_{ii}=[1,1]$
            \ElsIf {$ \overline{ze^l_i} < 0$}
                \State //Update to 0
                \State $xe^l_i=0$
                \State $A^l_i=0,[\Lambda]^l_{ii}=[0,0]$
            \Else
                \State //Introduce new variable $v^l_i$
                \State $\overline{v^l_i}= \overline{ze^l_i}$, $\underline{v^l_i}=0$
                \State $xe^l_i=v^l_i$
                \State $A^l_i=\ast, [\Lambda]^l_{ii}=[0,1]$
            \EndIf
        \EndFor
    \EndFor
    \State $ze^L=W^{(L)}xe^{L-1}+b^{(L)}$
    \State output-bounds=$[\underline{ze}^L\overline{ze}^L]$
\EndProcedure
\end{algorithmic}
\end{algorithm}


The initial sub-problem (root node of the BaB tree) is denoted as $\rho_I$, whose corresponding $\psi$ is $\mathcal{X}$ itself and activation pattern $A$ is as determined by Algorithm SymProp.
The constraints corresponding to the initial hyper-rectangular input space $\mathcal{X}$ are given as
$H=\bigcap\{zle^0_i< \overline{z}_i^{(0)}, i=1,\ldots,n^{(0)}\} \cap \bigcap\{zle^0_i> \underline{z}_i^{(0)}, i=1,\ldots,n^{(0)}\}$, which forms a bounded convex polytope. For any sub-problem, if the set of constraints $H$ form a bounded convex polytope and a hyper plane $zle^l_i=0$ cuts through this polytope, we get two partitions, which are also convex polytopes, given by the constraints,
$\{H\cap zle^l_i< 0\}$ and $\{H\cap zle^l_i > 0\}$. It follows from induction that any feasible set of constraints generated after any number of partitioning steps as stated above, forms a bounded convex polytope.

The use of open half-spaces, or strict inequalities, makes sure that when we have a sub-problem with no $\ast$-neurons, any feasible point in that region is actually a differentiable point. The reason is that the strict inequalities implies that the corresponding neurons takes non-zero values as inputs for all points in the feasible region, which in turn implies that the Jacobian is well defined. 

\subsection{Propagating linear relations}
In order to generate the half-space constraints corresponding to neurons at a layer $l$, we need to have the expressions $zle^l$ for that layer, which is possible only if there are no $\ast$-neurons present in previous layers. Therefore for any sub-problem, we simply propagate the linear relations $zle$ across the layers until we reach the last layer of the network or encounter a layer which contains a $\ast$-neuron (since by moving further we cannot preserve the linear relationship with the inputs anymore). $xle^l_i$ is the output expression of a neuron corresponding to the input $zle^l_i$. It is computed only if there are no $\ast$-neurons in a layer.

\[ zle^l=W^{(l)}xle^{l-1}+b^l\\\]
\[
    xle^l_i= \begin{cases}
    0 & \text{if}~ A^l_i=0\\
    zle^l_i & \text{if}~ A^l_i=1
    \end{cases}
\]
where $xle^0=zle^0$ is the symbolic input vector.
This process, called as LinProp (as used in subsequent algorithms), is similar to SymProp except that we don't need to evaluate any bounds or introduce any new variables.

\subsection{Lipschitz bounds}
In this section we describe the procedure to calculate valid upper bounds of the Lipschitz constant of a sub-problem (similar to \cite{fastlip}).  We use interval matrix multiplication to upper bound the Lipschitz constant for a sub-problem. Similar to the Jacobian $J_f(x)$ for a single point $x$ as described before, we can represent the notion of a Jacobian matrix for a set of points $X$ by an interval matrix where each element is an interval which bounds the partial derivatives of all the points. We have 
\[[J(X)] = W^{(L)} [\Lambda]^{(L-1)}W^{(L-1)}\ldots [\Lambda]^{(1)}W^{(1)}\]
where $[\Lambda]^l$ is an interval matrix used to denote the activation pattern for layer $l$, as described in SymProp. The intervals $[0,1]$ used to represent $\ast$-neurons, takes into account both possible activation states to calculate the extreme cases of lower and upper bounds of the partial derivatives.
Once we obtain this $[J]$ matrix, calculated using interval matrix multiplication, we construct an ordinary matrix $U$ of the same size as that of $[J]$ where each element upper bounds the absolute values of the corresponding intervals in $[J]$. It is to be noted that the interval bounds are over-approximation of the actual values.  The $p$-norm of the constructed $U$ matrix gives us an upper-bound of the local Lipschitz constant for the corresponding sub-problem. In case we have no $\ast$-neurons (corresponds to a piece-wise linear region), we simply return the $p$-norm of the Jacobian at that region.


\begin{algorithm}
\caption{Calculating Lipschitz Bounds}
\begin{algorithmic}[1]
\Procedure{LipschitzBounds}{sub-problem $\rho$}
    \If{$\rho$ has no $\ast$ neurons}
        \State //$\psi$ is a linear region, hence Jacobian is defined
        \State\Return {$\|J_f\|_p$}
    \Else
        \State Initialize $[J]_{ij}=[W_{ij}^{(1)}, W_{ij}^{(1)}], \forall i,j$
        \For{$l=2, \ldots,L$}
            \State //interval matrix multiplication
            \State $[J]=W^{(l)}[\Lambda]^{(l-1)}[J]$
        \EndFor
        \State Define $\{ U| U_{ij}=max(abs(\underline{[J]_{ij}}), abs(\overline{[J]_{ij}})), \forall i,j\}$
        \State \Return $\|U\|_p$
    \EndIf
\EndProcedure
\end{algorithmic}
\end{algorithm}


\subsection{Branching}
\label{sec:split}
\label{sec:split}
The main idea behind the branching step is to create a partition in the polytope associated with a sub-problem and compute the upper-bound of the local Lipschitz constant of each partition to get tighter estimates. The partitions are made in such a way such that inputs from one part activates a specific neuron while for the inputs from the other part, the neuron is inactive.
A new set of constraints is created by adding a new constraint ($zle^l_i<0$ in one case, $zle^l_i>0$ in the other) corresponding to a $\ast$-neuron at layer $l$, to the existing set of half-space constraints $H$. A related idea of solving linear programs to get refined values after adding constraints similar to this is discussed in \cite{neurify}. 
If the constraints set is feasible, we create a sub-problem with the new set of constraints and activation pattern (based on the new half-space constraint added). The next steps include propagating the linear expressions, $zle$, and also removing some of the  $\ast$-neurons whose states can be determined to be active/inactive with respect to the new constraint set.
Finally we calculate the Lipschitz bound for the newly created sub-problem and add it to the set of sub-problems. Note that the Lipschitz bounds of a branched problem can only decrease, since we reduce uncertainties.

\begin{algorithm}
\caption{Branching}
\begin{algorithmic}[1]
\Procedure{Branch}{Sub-problem $\rho$}
\State $t\leftarrow$ first layer with $\ast$-neurons
\State select a $\ast$-neuron($i$th neuron at layer $t$)
\State $H_0, H_1 \leftarrow \{(H,\rho)\cap zle^t_i< 0\}, \{(H,\rho)\cap zle^t_i> 0\}$
\State $S\leftarrow S\backslash \{\rho\}$
\For{$r\in\{0,1\}$ }
    \If {$Feasible(H_r)$}
        \State Create sub-problem $\rho_r \equiv \{H_r, (A,\rho)\}$
        \State  $(A^t_i, \rho_r)\leftarrow r, ([\Lambda]^t_{ii}, \rho_r)\leftarrow [r,r]$
        \State LinProp($\rho_r$)//propagating linear relations
        \State FFilter($\rho_r$)//feasibility filter
        \State $(L_{ub},\rho_r)=LipschitzBounds(\rho_r)$
        \State $S\leftarrow S\cup \rho_r$
        \If{$\rho_r$ has no $\ast$-neurons}
            \State //update lower bound
            \State $glb \leftarrow max(glb, (L_{ub},\rho_r))$
        \EndIf
    \EndIf
\EndFor
\EndProcedure
\end{algorithmic}
\end{algorithm}

\subsection{Feasibility Filter}
When a new sub-problem is created by creating a partition in a polytope, the smaller partitions may be able to decide the states (active/inactive) of more than one neuron which were marked as $\ast$-neurons before the partitioning. It is easy to see that identifying such a neuron early on in the BaB tree is better as it prevents repeating the process (feasibility checks) for several (in worst case, exponential) sub-problems which are generated later. 
We use a simple heuristic to do this. 
We keep fixing the states of $\ast$-neurons which are decidable by the new constraints till we encounter a neuron which still maintains both active and inactive states depending on the inputs. We check the feasibility of both $(H\cap zle^l_i<0)$ and $(H\cap zle^l_i>0)$ for a $\ast$-neuron at layer $l$. Based on which of these two is feasible we decide the activation state of the neuron. If both of them are feasible then this is a $\ast$-neuron with respect to $H$, and we terminate this step. Later, if we need to branch on this sub-problem we shall choose to branch on the $\ast$-neuron which was already found to maintain both active and inactive states. This strategy will always create two feasible sub-problems. We use this process, called FFilter, in combination with LinProp (for generating half-space constraints) to reduce $\ast$-neurons across layers.


\section{Final Algorithm}
\label{sec:final-algorithm}

This section puts together all the individual components discussed before and provides the main branch and bound framework, as provided in Algorithm~\ref{alg:final}. Given the network parameters and the input region of interest, the first step is to run symbolic propagation to mark the state of each neurons as active, inactive or $\ast$. This gives us the initial activation pattern. The first sub-problem is created with this activation pattern and half-space constraints given by the bounding constraints of the input region. The corresponding Lipschitz upper bound is also calculated. 
The set of sub-problems is initialized with this sub-problem. We use a max heap data structure to represent the sub-problems, sorted by the Lipschitz upper-bound of the sub-problems. $glb$ and $gub$ are used to keep track of the lower and upper bounds of $\mathcal{L}_p(f,\mathcal{X})$ respectively. The algorithm iteratively selects a sub-problem from the set with the highest Lipschitz upper bound and branches on it. Each newly created sub-problem with its own set of half-space constraints, activation pattern and corresponding Lipschitz upper bound is pushed into the heap. Also, if a sub-problem has no $\ast$-neurons it is a valid lower-bound for $\mathcal{L}_p(f,\mathcal{X})$, and we use it to compare and update $glb$.

\begin{algorithm}
\caption{Final Algorithm}
\label{alg:final}
\begin{algorithmic}[1]
\Procedure{LipBaB}{network $N$, input domain $\mathcal{X}$, approximation factor $k$}
\State Initial constraints $H \leftarrow$ Bounding constraints of $\mathcal{X}$
\State Initial activation $A \leftarrow$ SymProp($N, \mathcal{X}$)
\State Create initial sub-problem $\rho_I \equiv \{H,A\}$
\State LinProp($\rho_I$) //propagating linear relations
\State  $(L_{ub},\rho_I)\leftarrow LipschitzBounds(\rho_I)$
\State $S\leftarrow S\cup \rho_I$
\State $gub,glb\leftarrow (L_{ub},\rho_I),0$ // initialize lower and upper bounds of $\mathcal{L}_p(f,\mathcal{X})$
\If {$\rho_I$ has no $\ast$-neurons}
    \State $glb \leftarrow (L_{ub},\rho_I)$
\EndIf
\While{\textit{True}}
    \State $\rho' \leftarrow \arg\max_{\rho \in S}(L_{ub},\rho)$
    \State $gub\leftarrow (L_{ub},\rho')$
    \If{$gub\leq k.glb$} 
        \State //k=1 implies $gub=glb=\mathcal{L}_p$
        \State break 
    \Else
        \State $Branch(\rho')$
    \EndIf
\EndWhile
\State\Return{ $gub$}
\EndProcedure
\end{algorithmic}
\end{algorithm}

For an approximation factor $k$, we can terminate when $gub \leq k.glb$ since we know that $\mathcal{L}_p(f,\mathcal{X})$ lies between $glb$ and $gub$. While calculating the exact local Lipschitz constant the algorithm is terminated when $gub=glb$ or equivalently, the sub-problem at the top of the heap as no $\ast$-neurons. This means that the input region corresponding to the sub-problem is actually a piece-wise linear region and therefore the local Lipschitz upper-bound of that region is exact instead of an upper bound. Also, since this is the highest among all the other sub-problems, by definition, this is the exact local Lipschitz constant of the entire input region $\mathcal{X}$. The algorithm returns tighter estimates of the Lipschitz upper-bound iteratively until convergence, hence terminating early because of constraints like time/memory, will always provide us with a valid upper-bound.

To compute the global Lipschitz constant we need to simply mark every neuron as $\ast$-neuron. This takes care of all the possible activation patterns throughout $\mathbb{R}^n$ and there is no need for any initial input constraints. The rest of the procedure is same. Note, in this case the feasible region of sub-problems are not necessarily bounded. 


\subsection{Analysis}
To prove both the correctness of the Lipschitz upper-bound calculation and convergence of the BaB algorithm we use some simple properties of interval arithmetic.

\subsubsection{Correctness of Lipschitz upper bounds}
The reason for using an interval $[0,1]$ for a $\ast$-neuron is to consider both the activation states of the neuron to calculate the extreme cases of lower and upper bound. Since any basic arithmetic operation on intervals bounds the resultant intervals considering both the end points of every interval (which in our case are the activation states $0$ and $1$), we know the resultant interval bounds on the partial derivatives obtained after interval matrix multiplication are exact/over-approximated bounds which has already covered all the cases of possible activation patterns.

\subsubsection{Convergence of the algorithm}
First, we note that if the activation state of any neuron is decided in a node of the BaB  tree, then it stays decided for any node branched from it. This is simply because if a neuron is active/inactive for a set of inputs, then it is also active/inactive accordingly for any of its subset. 
Proving the convergence of the Lipschitz bounds depends on the property that if any interval which is used in some basic arithmetic operations is replaced by a sub-interval then the the resulting interval bounds can only tighten. Every time we branch we reduce the number of $\ast$-neurons by at least one, in each branched problem (change interval $[0,1]$ to sub-intervals $[0,0]$/$[1,1]$), which implies that the Lipschitz bounds of the branched sub-problems can only decrease. This along with the fact that the number of $\ast$-neurons are finite and strictly reduce along any path from the root of the BaB tree, proves the convergence of the algorithm with termination.

\subsubsection{Number of sub-problems}
Since we have a method which guarantees that every branching will create two feasible sub-problems, i.e., the BaB tree will be a full binary tree, we can show that the number of sub-problems generated can be bounded by at most $2p-1$, where $p$ is the number of piece-wise linear regions in the input space $\mathcal{X}$. We know that a sub-problem corresponding to a piece-wise linear region has no $\ast$-neurons and vice versa, and is therefore, a terminal sub-problem. Hence, if we keep generating feasible sub-problems till we have no more to split, we will be having at most $p$ leaf nodes (terminal sub-problems), each of which corresponds to a piece-wise linear region. Therefore the total number of sub-problems generated will be $2p-1$, since the BaB tree is a full binary tree. However, in practice, the total number of sub-problems generated is usually much less, because of branch and bound.




\section{Implementation and Experiments}
\label{sec:experiments}

In this section we provide experimental results to illustrate the working of our algorithm on various parameters. All the experiments were done on Google Colab notebooks. The implementation is done using Python (available in \href{https://github.com/pyrobits/LipBaB}{GitHub}). For feasibility checking, we used the `GLPK' Linear Programming solver with a default tolerance limit of 1e-7. The MLPClassifier module from scikit-learn was used to train the networks. The local Lipschitz computation was done on the networks considering the input region $[0,1]^4$ for the Iris data-set and $[0,0.1]^{10}$ for the synthetic data-sets. The synthetic data-set  consisted of 2000 data points of 10 dimensions and 3 classes, generated using scikit-learn. Note that the choice of input region is arbitrary.

\begin{table}[]
\begin{tabular}{|c|c|c|c|c|c|c|c|c|c|}
\hline
\multirow{2}{*}{Network}  & \multirow{2}{*}{p-norm}  & \multicolumn{2}{c|}{First estimation}                                 & \multicolumn{2}{c|}{2-approximation}                                                        & \multicolumn{2}{c|}{1.5-approximation}                                                                                              & \multicolumn{2}{c|}{Exact}    \\ \cline{3-10} 
                                                                       &         & Time   & Value                                                               & Time                                                           & Value                                                            & Time                                                             & Value                                                            & Time                                                             & Value                                                            \\ \hline
                                                                       
\begin{tabular}[c]{@{}c@{}}Iris\_Network\\ (4,5,5,3)\end{tabular} & \begin{tabular}[c]{@{}c@{}}1\\ 2\\ inf\end{tabular} & \begin{tabular}[c]{@{}c@{}}0.02s\\ 0.02s\\ 0.02s\end{tabular} & \begin{tabular}[c]{@{}c@{}}8.776\\ 8.810\\ 14.663\end{tabular} & \begin{tabular}[c]{@{}c@{}}0.06s\\ 0.07s\\ 0.06s\end{tabular} & \begin{tabular}[c]{@{}c@{}}6.874\\ 7.098\\ 12.606\end{tabular} & \begin{tabular}[c]{@{}c@{}}0.06s\\ 0.06s\\ 0.06s\end{tabular} & \begin{tabular}[c]{@{}c@{}}6.874\\ 7.098\\ 12.606\end{tabular} & \begin{tabular}[c]{@{}c@{}}0.11s\\ 0.09s\\ 0.06s\end{tabular} & \begin{tabular}[c]{@{}c@{}}5.959\\ 6.772\\ 12.606\end{tabular} \\ \hline                                                                                          
\begin{tabular}[c]{@{}c@{}}SD\_Network1\\ (10,15,10,3)\end{tabular}    & \begin{tabular}[c]{@{}c@{}}1\\ 2\\ inf\end{tabular} & \begin{tabular}[c]{@{}c@{}}0.04s\\ 0.05s\\ 0.04s\end{tabular} & \begin{tabular}[c]{@{}c@{}}15.105\\ 13.019\\ 25.243\end{tabular}    & \begin{tabular}[c]{@{}c@{}}0.17s\\ 0.16s\\ 0.18s\end{tabular}  & \begin{tabular}[c]{@{}c@{}}12.704\\ 10.658\\ 19.543\end{tabular} & \begin{tabular}[c]{@{}c@{}}0.17s\\ 0.16s\\ 0.18s\end{tabular}    & \begin{tabular}[c]{@{}c@{}}12.704\\ 10.658\\ 19.543\end{tabular} & \begin{tabular}[c]{@{}c@{}}0.25s\\ 0.28s\\ 0.27s\end{tabular}    & \begin{tabular}[c]{@{}c@{}}10.413\\ 9.531\\ 16.275\end{tabular}  \\ \hline
\begin{tabular}[c]{@{}c@{}}SD\_Network2\\ (10,20,15,10,3)\end{tabular} & \begin{tabular}[c]{@{}c@{}}1\\ 2\\ inf\end{tabular} & \begin{tabular}[c]{@{}c@{}}0.06s\\ 0.07s\\ 0.06s\end{tabular} & \begin{tabular}[c]{@{}c@{}}101.705\\ 101.940\\ 182.988\end{tabular} & \begin{tabular}[c]{@{}c@{}}1.01s\\ 1.79s\\ 2.26s\end{tabular}  & \begin{tabular}[c]{@{}c@{}}70.928\\ 55.969\\ 82.938\end{tabular} & \begin{tabular}[c]{@{}c@{}}1.01s\\ 1.79s\\ 2.26s\end{tabular}    & \begin{tabular}[c]{@{}c@{}}70.928\\ 55.969\\ 82.938\end{tabular} & \begin{tabular}[c]{@{}c@{}}3.08s\\ 3.35s\\ 2.97s\end{tabular}    & \begin{tabular}[c]{@{}c@{}}48.049\\ 40.057\\ 72.286\end{tabular} \\ \hline
\begin{tabular}[c]{@{}c@{}}SD\_Network3\\ (10,30,30,30,3)\end{tabular} & \begin{tabular}[c]{@{}c@{}}1\\ 2\\ inf\end{tabular} & \begin{tabular}[c]{@{}c@{}}0.20s\\ 0.20s\\ 0.20s\end{tabular} & \begin{tabular}[c]{@{}c@{}}131.727\\ 139.808\\ 272.416\end{tabular} & \begin{tabular}[c]{@{}c@{}}9.43s\\ 19.11s\\ 18.90s\end{tabular} & \begin{tabular}[c]{@{}c@{}}33.890\\ 30.684\\ 63.035\end{tabular} & \begin{tabular}[c]{@{}c@{}}22.24s\\ 25.94s\\ 23.88s\end{tabular} & \begin{tabular}[c]{@{}c@{}}25.871\\ 28.474\\ 55.311\end{tabular} & \begin{tabular}[c]{@{}c@{}}56.00s\\ 78.88s\\ 59.98s\end{tabular} & \begin{tabular}[c]{@{}c@{}}19.370\\ 19.463\\ 39.111\end{tabular} \\ \hline
\end{tabular}
\caption{\label{tab:experiments} Lipschitz computation for different approximation factors}
\end{table}

\begin{figure}[h]
\centering
\begin{tabular}{ll}
\includegraphics[scale=0.4]{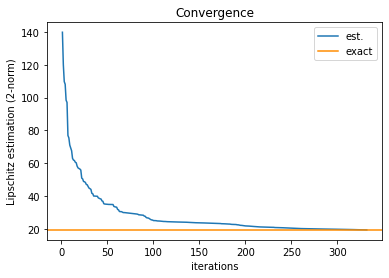}
&
\includegraphics[scale=0.4]{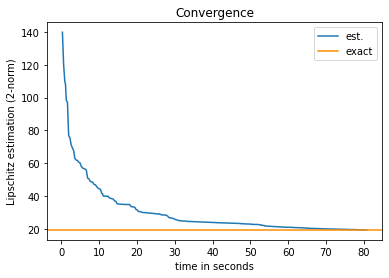}
\end{tabular}
\caption{Convergence of the algorithm for a network with layer sizes (10,30,30,30,3)}
\label{fig:convergence}
\end{figure}

It was observed that the algorithm achieves a good approximation factor (sufficient for most practical cases) within a reasonable time, but gradually converges slowly as the number of sub-problems increases exponentially. Also, the output bounds calculated for a network using SymProp was found to be much tighter than that calculated using naive interval propagation. It was found as expected that the optimization strategy,  FFilter, improved the performance significantly.

Achieving arbitrary precision with regards to feasibility tolerance for constraints is not possible for solvers with finite tolerance limits. For this reason, in very rare cases, it may falsely validate some set of constraints to be feasible if the precision demand is more than the tolerance limit of solvers. This might validate false activation patterns which can (not necessarily) cause the algorithm to report a value larger than the true exact Lipschitz constant.

\section{Conclusion}
We provide techniques to calculate exact bounds of the Lipschitz constant of neural networks, which has several important applications in deep learning. Our main contribution is that this technique can be applied for exact/approximate computation of the local/global Lipschitz constant for any $p$-norm. 

This work discusses several ideas within an unified branch and bound framework which can be extended to related problem areas of neural network verification like output bounds computation, robustness estimation etc.. The algorithm also provides local information about the sensitivity of the neural network corresponding to different parts of the input space. The branch and bound strategy discussed here, based on the partitioning of input space, can be potentially extended to iteratively refine the estimation of other important properties of neural networks which varies across the input space. The exact computation of the Lipschitz constant for different norms may prove to be useful for other theoretical studies of neural networks as well as real-life safety critical scenarios requiring formal guarantees about the behaviour of neural networks.

\bibliographystyle{plain}
\bibliography{references}

\end{document}